\newtheorem{defi}{Definition}
\newtheorem{theorem}{Theorem}
\newtheorem*{theorem*}{Theorem}
\title{Plan To Predict: Learning an Uncertainty-Foreseeing Model for Model-Based Reinforcement Learning}
\author{%
  Zifan Wu \\
  School of Computer Science and Engineering\\
  Sun Yat-sen University\\
  Guangzhou, China \\
  \texttt{wuzf5@mail2.sysu.edu.cn} \\
  \And
  Chao Yu\thanks{corresponding author} \\
  School of Computer Science and Engineering\\
  Sun Yat-sen University\\
  Guangzhou, China \\
  \texttt{yuchao3@mail.sysu.edu.cn} \\
  \And
  Chen Chen \\
  Noah's Ark Lab \\
  Huawei\\
  Beijing, China \\
  \texttt{cclvr@163.com} \\
  \And
  Jianye Hao \\
  Noah's Ark Lab \\
  Huawei\\
  Beijing, China \\
  \texttt{haojianye@huawei.com} \\
  \And
  Hankz Hankui Zhuo \\
  School of Computer Science and Engineering\\
  Sun Yat-sen University\\
  Guangzhou, China \\
  \texttt{zhuohank@mail.sysu.edu.cn} \\
}
\begin{document}

\maketitle

\begin{abstract}
    In Model-based Reinforcement Learning (MBRL), model learning is critical since an inaccurate model can bias policy learning via generating misleading samples.
    However, learning an accurate model can be difficult since the policy is continually updated and the induced distribution over visited states used for model learning shifts accordingly.
    Prior methods alleviate this issue by quantifying the uncertainty of model-generated samples.
    However, these methods only quantify the uncertainty passively after the samples were generated, rather than foreseeing the uncertainty before model trajectories fall into those highly uncertain regions.
    The resulting low-quality samples can induce unstable learning targets and hinder the optimization of the policy. 
    Moreover, while being learned to minimize one-step prediction errors, the model is generally used to predict for multiple steps, leading to a mismatch between the objectives of model learning and model usage. 
    To this end, we propose \emph{Plan To Predict} (P2P), an MBRL framework that treats the model rollout process as a sequential decision making problem by reversely considering the model as a decision maker and the current policy as the dynamics. 
    In this way, the model can quickly adapt to the current policy and foresee the multi-step future uncertainty when generating trajectories.
    Theoretically, we show that the performance of P2P can be guaranteed by approximately optimizing a lower bound of the true environment return.
    Empirical results demonstrate that P2P achieves state-of-the-art performance on several challenging benchmark tasks. 
\end{abstract}

\section{Introduction}
\label{sec:intro}

Through learning an approximate dynamics of the environment and then using it to assist policy learning,  Model-based Reinforcement Learning (MBRL) methods~\citep{moerland2020model} have been shown to be much more sample-efficient than model-free methods both theoretically and empirically~\citep{nagabandi2018neural, luo2018algorithmic, janner2019trust}.
The learning of the model is critical for MBRL, since an inaccurate model can bias policy learning through generating misleading pseudo samples, which is also referred to as the model bias issue~\citep{deisenroth2011pilco}.
However, learning an accurate model can be difficult, since the update of the policy continually shifts the distribution over visited states, and due to lack of enough training data for the modified distribution, the model can suffer from overfitting and become inaccurate in regions where the updated policy is likely to visit. 
This issue is further aggravated when using expressive models like deep neural networks, which are inherently prone to overfitting.

\begin{figure}[t]
  \centering
  \includegraphics[width=0.82\textwidth,height=150pt]{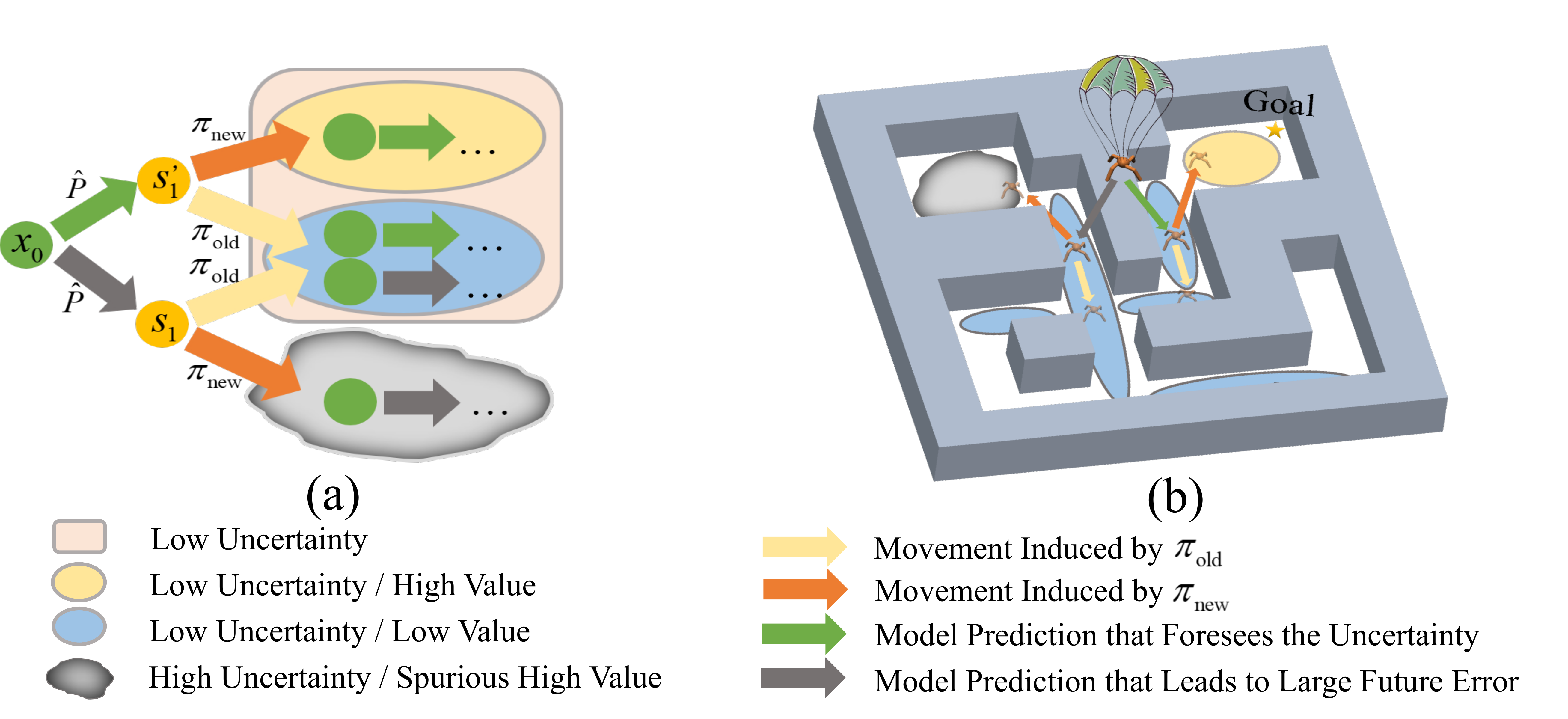}
  \caption{\footnotesize (a) Intuition for an uncertainty-foreseeing model; (b) A specific maze task modified from the D4RL benchmark~\citep{fu2020d4rl}, where an ant falls from the sky and randomly lands on the two sides of a wall. The ant aims to reach the goal using shortest time (assuming that the goal cannot be seen from the sky). 
  }
  \label{fig:P2P illustration}
\end{figure}
Prior methods mitigate this issue by applying uncertainty quantification techniques, typically by using a bootstrapped model ensemble~\citep{kurutach2018model,chua2018deep,janner2019trust}. 
However, due to the lack of explicit considerations of the aforementioned policy shift issue, these methods can only quantify the uncertainty passively after the samples were generated, and thus fail to prevent the trajectories induced by the current policy from running into regions with high uncertainty.
This incapability to avoid uncertain regions can result in low quality of the generated samples, which provides unstable learning targets and thus may hinder the learning of the policy. 
Moreover, the model in the existing methods is usually used to predict for multiple steps, while the model learning objective is commonly to minimize the one-step prediction errors.
In other words, there exists an objective mismatch~\textcolor{black}{\citep{lambert2020objective}} between model learning and model usage, which means that the policy optimization generally expects accurate multi-step trajectories (i.e., model usage), yet the model is only estimated to consider the accuracy of immediate predictions  (i.e., model learning).

To gain a conceptual understanding of the above two motivations, consider the scenario in Figure~\ref{fig:P2P illustration}(a):
1) The approximate model $\hat{P}$ of the true environment dynamics $P$ predicts $\hat{P}(s_1|x_0)>0, \hat{P}(s_1'|x_0)>0$ and $ \hat{P}(s|x_0)=0, \forall s\notin\{s_1,s_1'\}$ at the state-action pair $x_0$; 
and 2) The policy is updated from $\pi_{\text{old}}$ to $\pi_{\text{new}}$ to exploit the regions with high values \textcolor{black}{predicted by the value function} in the model, and the resulting distribution shift exposes the model's inaccuracy at $s_1$ to the current policy, leading to large expected prediction error in the next step, i.e.,
$\sum_{a'}\pi_{\text{new}}(a'|s_1)D\big(\hat{P}(\cdot|s_1, a'), P(\cdot|s_1, a')\big)>>\sum_{a'}\pi_{\text{new}}(a'|s_1')D\big(\hat{P}(\cdot|s_1', a'), P(\cdot|s_1', a')\big),$
where $D(\cdot, \cdot)$ is some distance metrics and the symbol $>>$ means significantly greater than.
Now starting from $x_0$, under the updated policy $\pi_{\text{new}}$, predicting $s_1$ is likely to direct the trajectory into highly uncertain regions with spurious high values (e.g., the grey region in Figure~\ref{fig:P2P illustration}(b)), and cause considerably larger error in the subsequent trajectory generation than that of predicting $s_1'$.
In contrast, if the model can foresee this uncertainty through explicit considerations of the current policy, then it can learn to plan for future predictions and thus jump to $s_1'$ more often to prevent the trajectory from falling into uncertain regions (e.g., the ant falls into the right hand side of the wall in Figure~\ref{fig:P2P illustration}(b)).
Therefore, to generate trajectories that have small accumulation errors, the model should be aware of the update of the policy and change its predicting ``strategy'' by taking into account the long-term effect of immediate predictions. 

Inspired by the above intuition, we propose a new framework for MBRL, named \emph{Plan To Predict} (P2P).
Our core idea is to treat the model rollout process as a sequential decision making problem and learn the model by the corresponding algorithms, such as reinforcement learning and model predictive control. 
During the learning of the model, the current policy is fixed and serves as the environment dynamics, while at the same time the model becomes a decision maker and is penalized via the prediction errors at each step. 
Through actively interacting with the current policy and minimizing the accumulative prediction errors on the resulting state distribution, the model can quickly adapt to the state distribution shift by considering the long-term effect of the immediate predictions, thus foreseeing the uncertainty of regions that model trajectories are likely to fall into.
Such foresight of uncertainty enables the model to plan to predict for multiple steps, which improves the quality of the pseudo samples generated by the model and eventually benefits policy learning.
Note that the typical treatment of minimizing one-step prediction errors regardless of the current policy is in fact a greedy optimization process, and thus can be viewed as a special case of P2P. 
Some prior works, such as~\citep{nagabandi2018neural, luo2018algorithmic}, adopt similar multi-step prediction loss to learn the model, yet these works simply compute the loss on trajectories collected by previous policies and can still suffer from severe distribution shift due to the lack of considerations of the policy update.


Theoretically, we prove that P2P approximately optimizes a lower bound of the true environment return.
Different from prior works~\citep{luo2018algorithmic,janner2019trust}, our theoretical analysis does not scale up the step-wise prediction errors to their maximum over timesteps when considering the impact of model error on the return approximation, thus resulting in a tighter bound of the true return and also justifying the objective mismatch issue.
Optimizing this bound not only provides a stronger guarantee for policy improvement, but also enables the model to quickly adapt to the current policy.
Empirical results demonstrate that P2P outperforms the existing state-of-the-art methods on several challenging benchmark tasks. 
It also deserves to note that, due to the capability to avoid model error, P2P may be particularly suited for offline learning tasks, where incremental data collection is not allowed and thus the model error is harder to be corrected than in online settings, inducing an irreversible bias on policy learning. 
Therefore, we also provide an evaluation of P2P in offline learning settings in the experiments, and present an interpretative visualization on how the learning objective of P2P affects the model predictions. 

\section{Preliminaries and Notations}
A discrete time Markov desision process (MDP) $\mathcal{M}$ is defined by tuple $(\mathcal{S}, \mathcal{A}, R, P, \rho_0, \gamma)$, where $\mathcal{S}$ is the state space, $\mathcal{A}$ the action space, $R(s, a)$ a scalar reward function of $s\in \mathcal{S}$ and $a\in\mathcal{A}$, $P(s'|s, a)\in[0, 1]$ the transition dynamics which is assumed to be unknown, $\gamma\in[0, 1)$ the discount factor and $\rho_0$ the initial state distribution.
A stochastic policy $\pi(a|s)$ maps states to a probability distribution over the action space.
The expected return of $\pi$ defined by $J(\pi):=\mathbb{E}_{\pi,P}\left[\sum_{t=0}^\infty \gamma^tR(s_t, a_t)\right] $ is the expectation of the sum of discounted rewards induced by $\pi$.
Following the general setting of reinforcement learning, the goal is to search for an optimal policy that maximizes the expected return from the environment, i.e., $\mathop{\arg\max}_{\pi}J(\pi)$.

MBRL methods learn a model $\hat{P}$ to approximate the unknown dynamics $P$, and then use this model to aid policy learning.
While the model can be utilized in various ways, this paper focuses on the usage of generating pseudo samples to enrich the dataset, which is  one of the most common usages of the model.
The expected return of $\pi$ predicted by the model $\hat{P}$ is denoted as $J^{\hat{P}}(\pi):=\mathbb{E}_{\pi,\hat{P}}\left[\sum_{t=0}^\infty \gamma^tR(s_t, a_t)\right]$.
In our analysis, \textcolor{black}{the reward function is assumed to be available,~\footnote{\textcolor{black}{Note that this is a commonly used assumption since the sample complexity of learning the reward function with supervised learning is a lower order term compared to the one of learning the transition model~\citep{gheshlaghi2013minimax}.}}} and the policy obtained in the last iteration is denoted as the data-collecting policy $\pi_D$, 

\section{The Plan To Predict (P2P) Framework}

In this section, we present a detailed description of our proposed algorithmic framework for MBRL, i.e., \emph{Plan To Predict} (P2P).
In Section~\ref{sec:framework}, the model rollout process is reformulated as a sequential decision making problem using the notion of MDP and then a meta-algorithm of the P2P framework is provided as a generic solution to this reformulated problem.
In Section~\ref{sec:theory}, theoretical results are provided to guarantee the performance of P2P.
In Section~\ref{sec:practical}, two practical algorithms are developed under the P2P framework by handling some practical issues.

\subsection{The Overall Framework}
\label{sec:framework}
We now define the \emph{model MDP} to reformulate the model rollout problem and outline our overall algorithm design.
\begin{defi}
\label{def:model MDP}
    The model MDP $\mathcal{M}_{\pi}$ is defined by tuple $(S^m, A^m, R^m, P^m, \gamma, \rho_0^m)$, 
    where at timestep~$t$, $s^m_t:=(s_t, a_t)\in S^m,$ \textcolor{black}{$ a^m_t:=s_{t+1}\in A^m,$} $R^m(s^m_t, a^m_t):=-\big(\hat{P}(s_{t+1}|s_t, a_t) -P(s_{t+1}|s_t, a_t) \big)^2$, $P^m(s^m_{t+1}|s^m_t, a^m_t):=\pi(a_{t+1}|s_{t+1})\hat{P}(s_{t+1}|s_t, a_t), \gamma$ is the discount factor , $\rho_0^m:=\pi(a_0|s_0)\rho_0$, and $\rho_0$ is the initial state distribution of the true environment. 
\end{defi}
In the model learning process of P2P, the roles played by the policy $\pi$ and the model $\hat{P}$ are reversed:
$\pi$ is fixed to serve as the environment dynamics and $\hat{P}$, now as the decision maker, interacts with $\pi$ and is updated to minimize the accumulative prediction error along the generated trajectories.
Similar to the optimization of the policy, the objective of this learning process can be written as $\arg\max_{\hat{P}} J^{\pi}\big(\hat{P}\big)$ where $J^{\pi}\big(\hat{P}\big) :=\mathbb{E}[\sum_t\gamma^t R^m(s_t^m, a^m_t)] $.
The overall algorithmic framework of P2P is depicted in Algorithm~\ref{alg:meta}, 
\textcolor{black}{where the main difference compared to most existing MBRL methods lies in Line 5, i.e., the learning of the model.
Specifically, this process can be achieved in a way similar to policy learning, i.e., generating samples by actively interacting with the policy (now viewed as the background environment), and optimizing the expected return $J^{\pi}\big(\hat{P}\big)$ accordingly. 
This optimization can in principle be done by any approaches solving sequential decision making problems.
As two representatives of these approaches, \textit{Model Predictive Control} (MPC)~\citep{camacho2013model} and \textit{Reinforcement Learning} (RL)~\citep{sutton2018reinforcement} are both applied in our practical version of P2P, which are detailed in Section~\ref{sec:practical}.
In addition, one of the biggest gaps between this theoretical construction of the \emph{model MDP} and the practical implementation may lie in the derivation of $R^m$. 
According to Definition~\ref{def:model MDP}, the true environment dynamics is required to compute this reward function, while in practice the true dynamics is not accessible in the model learning phase. 
Therefore, depending on which underlying approach (i.e., MPC or RL) is chosen for model optimization, we propose two different alternative plans to approximate $R^m$ in Section~\ref{sec:practical}.
}


\begin{algorithm}[t]
  \caption{Meta-Algorithm of the P2P Framework} 
  \label{alg:meta}
	\begin{algorithmic}[1]
    \State Initialize the policy $\pi $ and the approximate model $\hat{P}$;
    \State Initialize empty dataset $\mathcal{D}$;
		\For {each epoch}
            \State Collect data with $\pi$ in real environment: $\mathcal{D}\leftarrow \mathcal{D}\cup\{(s_i, a_i, r_i, s_i')\}_i$;
            \State Optimize $\hat{P}$ for multi-step prediction under the current policy $\pi$: $\hat{P}\leftarrow \mathop{\arg\max}_{\hat{P}'} J^\pi\big(\hat{P}'\big)$
            \State Optimize $\pi$ under $\hat{P}$:  $\pi\leftarrow\mathop{\arg\max}_{\pi'} \hat{J}^{\hat{P}}(\pi') $
		\EndFor
	\end{algorithmic} 
\end{algorithm}

\subsection{Theoretical Results}
\label{sec:theory}
In general, MBRL methods learn an approximate model of the real world dynamics and learn the policy by maximizing the expected return predicted by the model. 
The better this pseudo return approximates the true environment return, the stronger guarantee the model can provide for policy improvement.
Thus, a critical problem in theory is how to bound the gap between the expected return of the model and the true environment, i.e., $\big|J^{\hat{P}}(\pi)-J(\pi)\big|$.
Suppose that this gap can be upper bounded by $C$, and the policy is updated from $\pi_D$ to $\pi$ with the expected model return improved by $2C$, 
then the policy improvement in terms of the true environment can be guaranteed:
\begin{align}
J(\pi)-J(\pi_D)&\geq \left(J^{\hat{P}}(\pi)-C\right) - \left(J^{\hat{P}}\left(\pi_D\right)+C\right)\nonumber\\
&\geq J^{\hat{P}}(\pi)-J^{\hat{P}}(\pi_D)- 2C\geq 0. \nonumber
\end{align}
Our main theoretical result is the following theorem:
\begin{theorem}
\label{thm:upperbound}
    The gap between the expected return of the model and the environment is bounded as:
    \begin{align}
    \label{eq:upperbound}
\left| J(\pi)-J^{\hat{P}}(\pi)\right| \leq \frac{2R_{max}}{(1-\gamma)^2}\left((2-\gamma)\epsilon_\pi + (1-\gamma)\sum_{t=1}^\infty \gamma^t\epsilon^m_t \right),
\end{align}
where $\epsilon_\pi:=\max_s D_{TV}(\pi_D(\cdot| s)\|\pi(\cdot| s))$ denotes the policy distribution shift, $\epsilon_t^m:=\mathbb{E}_{s\sim \hat{P}_{t-1}(s, a;\pi)}\big[ D_{TV}(\hat{P}(\cdot| s, a)\|P(\cdot| s, a)) \big]$ denotes the upper bound of one-step model prediction error at timestep $t$ of the model rollout trajectory, \textcolor{black}{$D_{TV}(p\|q)$ refers to the total variation between distribution $p$ and $q$, $R_{max}:=\max_{s,a}R(s,a)$, } and $\hat{P}_{t-1}(s, a;\pi)$ denotes the state-action distribution at $t$ under $\hat{P}$ and $\pi$.
\end{theorem}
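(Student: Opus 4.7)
The plan is to bound $|J(\pi)-J^{\hat P}(\pi)|$ by unfolding both returns step by step and controlling how the per-step model error and the per-step policy shift propagate through the discounted rollout. First I would write
\begin{equation*}
J(\pi)-J^{\hat P}(\pi)=\sum_{t=0}^\infty\gamma^t\bigl(\mathbb{E}_{(s,a)\sim\rho^{P,\pi}_t}[R(s,a)]-\mathbb{E}_{(s,a)\sim\rho^{\hat P,\pi}_t}[R(s,a)]\bigr),
\end{equation*}
where $\rho^{T,\mu}_t$ denotes the state--action distribution at time $t$ under kernel $T$ and policy $\mu$, and apply the standard inequality $|\mathbb{E}_p f-\mathbb{E}_q f|\le 2\|f\|_\infty D_{TV}(p,q)$ with $\|R\|_\infty\le R_{max}$. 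This reduces the task to bounding $D_{TV}(\rho^{P,\pi}_t,\rho^{\hat P,\pi}_t)$ for every $t$.

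For the model-error piece I would use a coupling/telescoping argument on the Markov kernels: one Bellman push satisfies $D_{TV}(\mathcal{T}_P\mu,\mathcal{T}_{\hat P}\mu)\le\mathbb{E}_{(s,a)\sim\mu}[D_{TV}(P(\cdot|s,a)\|\hat P(\cdot|s,a))]$, and iterating this along the $\hat P$-rollout gives $D_{TV}(\rho^{P,\pi}_t,\rho^{\hat P,\pi}_t)\le\sum_{k=1}^t\epsilon^m_k$. Swapping the order of summation in $\sum_t\gamma^t\sum_{k\le t}\epsilon^m_k$ collapses the double sum to $(1-\gamma)^{-1}\sum_k\gamma^k\epsilon^m_k$, which yields the $\tfrac{2R_{max}}{1-\gamma}\sum_t\gamma^t\epsilon^m_t$ piece of the theorem \emph{without} the coarse $\max_t\epsilon^m_t$ relaxation that is standard in prior analyses such as MBPO.

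For the $\epsilon_\pi$ term I would insert $\pi_D$ as an intermediate reference and either (i) apply a Kakade--Langford performance-difference step to the pair $(\pi,\pi_D)$ under both the true and the learned dynamics, or (ii) place a hybrid occupancy (e.g.\ $\rho^{\hat P,\pi_D}_t$) between $\rho^{P,\pi}_t$ and $\rho^{\hat P,\pi}_t$ so that each Bellman push contributes at most $\max_s D_{TV}(\pi_D(\cdot|s)\|\pi(\cdot|s))=\epsilon_\pi$. Discounting, summing, and combining the two legs of the chain (one inside the true MDP and one inside the model MDP) into a single telescoping sum is what should produce the coefficient $(2-\gamma)$ in front of $\epsilon_\pi/(1-\gamma)^2$ rather than a loose constant like $4$.

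The hard part, I expect, is recovering the tight $(2-\gamma)$ coefficient on $\epsilon_\pi$ \emph{simultaneously} with the fine-grained $\sum_t\gamma^t\epsilon^m_t$ structure on the model error. A blunt triple triangle inequality through $J(\pi_D)$ and $J^{\hat P}(\pi_D)$ is correct but gives the weaker constant $4R_{max}\epsilon_\pi/(1-\gamma)^2$, so the policy-shift correction has to be spliced directly into the simulation-lemma telescoping rather than stacked on top of it as a separate layer of inequalities, while the model-error side has to retain the per-step $\gamma^t$-weights instead of collapsing them to a worst-case $\max_t$. Executing this bookkeeping without double-counting on either side is where the proof must be done most carefully.
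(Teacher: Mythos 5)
Your plan for the model-error half of the bound is correct and is essentially forced by the form of the result: decompose $J(\pi)-J^{\hat P}(\pi)$ into per-timestep reward differences, bound each by $2R_{max}$ times the total variation between the time-$t$ state--action marginals, telescope that divergence one Bellman push at a time so that the expectation defining $\epsilon^m_k$ lands on the $\hat P$-rollout distribution $\hat P_{k-1}(\cdot\,;\pi)$ exactly as in the statement, and swap the order of summation to turn $\sum_t\gamma^t\sum_{k\le t}\epsilon^m_k$ into $(1-\gamma)^{-1}\sum_k\gamma^k\epsilon^m_k$. This reproduces the $\frac{2R_{max}}{1-\gamma}\sum_t\gamma^t\epsilon^m_t$ term and is the same mechanism by which the paper avoids the $\max_t\epsilon^m_t$ relaxation of prior analyses.

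The genuine gap is the $\epsilon_\pi$ term. You name two candidate routes, correctly observe that the blunt one (a triangle inequality through $J(\pi_D)$ and $J^{\hat P}(\pi_D)$) only yields $4R_{max}\epsilon_\pi/(1-\gamma)^2$, and then stop at ``this should produce $(2-\gamma)$'' without producing it; since that coefficient is precisely the refinement the theorem claims over prior work, the proposal as written proves a weaker statement, not the stated one. The identity $\frac{2-\gamma}{(1-\gamma)^2}=\frac{1}{1-\gamma}+\frac{1}{(1-\gamma)^2}$ tells you what the bookkeeping must achieve: one leg of the comparison may contribute only a flat, non-accumulating action-marginal discrepancy of $\epsilon_\pi$ per step (summing to $\epsilon_\pi/(1-\gamma)$), while only the other leg is permitted to accumulate $(t+1)\epsilon_\pi$ in its marginal (summing to $\epsilon_\pi/(1-\gamma)^2$); stacking two fully accumulating legs is exactly what produces the loose constant. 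To close the argument you must commit to one specific chain of intermediate occupancies in which the $\pi$-versus-$\pi_D$ discrepancy and the $P$-versus-$\hat P$ discrepancy are peeled off inside a single recursion, and then check that the model-error expectations still land on $\hat P_{t-1}(\cdot\,;\pi)$ rather than on a $\pi_D$-induced distribution. That consistency check is the ``double-counting'' risk you flag but do not discharge, and it is where the remaining work lies.
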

\begin{proof}
    Please refer to Appendix A.
\end{proof}
Transforming the inequality in Eq.~(\ref{eq:upperbound}) results in $J(\pi)\geq J^{\hat{P}}(\pi)-2C(\hat{P}, \pi)$, where $C(\hat{P},\pi)$ denotes the right hand side in Eq.~(\ref{eq:upperbound}). 
Thus by iteratively applying the update rule in Eq.~(\ref{eq:monotonic}), the above lower bound of the policy performance can be monotonically improved: 
\begin{equation}
\label{eq:monotonic}
    \pi, \hat{P}\leftarrow \mathop{\arg\max}_{\pi, \hat{P}} J^{\hat{P}}(\pi)-2C(\hat{P}, \pi).
\end{equation}
This update rule is ideal and usually impractical since it involves an exhaustive search in the state and action space to compute $C$, and requires full-horizon rollouts in the model for estimating the accumulative model errors. 
Thus, similar to how algorithms like TRPO~\citep{schulman2015trust} approximate their theoretically monotonic version, P2P approximates this update rule by maximizing the expected model return while keeping the accumulative model error small.
Specifically, via replacing the computationally expensive $D_{TV}$ with an expectation over $\hat{P}$ in the definition of $\epsilon_t^m$, it can be easily derived that maximizing $J^\pi(\hat{P}) $ defined in Section~\ref{sec:framework} is equivalent to minimizing $\sum_t \gamma^t\epsilon_t^m$.
As for the policy shift term $\epsilon_\pi$, though the bound suggests that this term may also be constrained, we found empirically that it is sufficient to only control the model error.
This may be explained by the relatively small scale of policy shift with respect to model error, as observed in~\citep{janner2019trust}.

It is worth noting that the accumulative error here is not simply summing up the multi-step errors on trajectories collected by any other policies as in~\citep{nagabandi2018neural,luo2018algorithmic}, because the definition of the step-wise model error (i.e., $\epsilon^m_t$) in Theorem~\ref{thm:upperbound} involves a policy-dependent expectation over the state space, indicating that this error is also affected by the update of the policy. 
To enable quick adaptation to this policy update, P2P minimizes the accumulative error on the state distribution induced by an active interaction between the model and the current policy, and thus better approximates the accumulative error defined in Theorem~\ref{thm:upperbound}.

Different from most of prior work~\citep{luo2018algorithmic,janner2019trust}, in Theorem~\ref{thm:upperbound} the step-wise model prediction errors are not scaled up to their maximum over timesteps (i.e., $\max_t\epsilon_t^m$), leading to a tighter error bound. 
As a result, this bound not only provides stronger guarantee for policy improvement, but also justifies the objective mismatch issue between the model learning and model usage from a theoretical perspective: 
Scaling up these error terms by such a potentially large magnitude yields a loose bound which leads to algorithms aiming at minimizing the upper bound of individual one-step prediction losses.
However, to obtain better policy improvement, the tighter bound in Eq.~(\ref{eq:upperbound}) implies that a more fine-grained model learning process is desired, 
which considers consecutive rollout steps and minimizes the accumulative prediction error, i.e., $\sum_{t}\gamma^t\epsilon^m_t$.
Using the notion of \emph{model MDP} defined in Section~\ref{sec:framework}, the model induced by minimizing one-step prediction errors can be intuitively interpreted as a greedy decision maker, which is often considered shortsighted and may easily lead to severe suboptimality.

\subsection{Practical Implementation}
\label{sec:practical}

In this subsection, we instantiate Algorithm~\ref{alg:meta} by specifying explicit approaches to learn the model and providing solutions to some practical issues.
Detailed pseudo codes can be found in Appendix~E.

\paragraph{P2P-MPC:}
\textcolor{black}{Since in our problem formulation the ``dynamics'' of the model rollout process, i.e., the current policy $\pi$, }is accessible, one of the simplest yet effective ways to learn the \textcolor{black}{model} can be the model predictive control (MPC)~\citep{camacho2013model}, which utilizes the dynamics to plan and optimize for a sequence of actions. 
Given the state $s^m_t$ at step $t$, the MPC controller first optimizes the sequence of actions $a^m_{t:t+H}$ over a finite horizon $H$, and then employs the first action of the optimal action sequence, i.e., $a^m_{H, t}:=\mathop{\arg\max}_{a^m_{t:t+H}}\mathbb{E}_{\hat{P}}\sum_{t'=t}^{t+H-1}R^m(s^m_{t'}, a^m_{t'}) $.
Computing the exact $\mathop{\arg\max}$ is computationally challenging due to the dimension of the state and action spaces, so we adopt the random-sampling shooting method~\citep{rao2009survey} which generates random action sequences, executes them respectively, and chooses the one with highest return predicted by the dynamics.
Besides, the reward $R^m$, which cannot be directly computed by definition, is approximated here using a neural network and trained on the environment dataset.
Specifically, during each training iteration, P2P-MPC first trains the model via traditional one-step prediction loss, and then trains the $\hat{R}^m$ network by taking transitions sampled from the environment dataset as inputs, and the minus prediction errors on these transitions as labels. 
The prediction error of an environment transition $(s, a, r, s')$ is computed via $\|\hat{s}'-s'\|+\|\hat{r}-r\|$, where $\hat{s}', \hat{r}$ are sampled from $\hat{P}(\cdot, \cdot|s, a)$.

\paragraph{P2P-RL:}
It is well-known that though effective, RL methods often suffer from high sample complexity.
Hence, the \textcolor{black}{model} is trained on the environment dataset instead of the samples generated by the interaction of the \textcolor{black}{model} and \textcolor{black}{policy}.
However, there are two issues to be addressed: 1) \textcolor{black}{As the ``environment dynamics'',} the \textcolor{black}{policy} is highly non-stationary due to its continuing update; and 2) The decision maker which generates the dataset is the true environment rather than our \textcolor{black}{model}.
The first issue can be addressed by updating the next state in each sampled transitions by applying the current \textcolor{black}{policy}, i.e., $s^m_{t+1}\leftarrow (s_{t+1}, \pi(s_{t+1}))$. 
As for the second issue, since the model-generated samples are not used in model learning, it can be seen approximately as an offline learning problem and thus can be addressed by well-developed offline RL methods~\citep{fujimoto2019off,levine2020offline}.
Due to the effectiveness and the implementational simplicity, we adopt SAC~\citep{haarnoja2018soft} with behavior cloning as the underlying learning algorithm, which bears similarities to the TD3+BC approach proposed by~\citep{fujimoto2021minimalist}.
Besides, an approach called DualDICE~\citep{nachum2019dualdice} is also applied to correct the estimation of the state distribution.
Finally, since the samples come from the true environment, the reward $R^m$ can be simply approximated by re-predicting the dynamics in the sampled transitions and computing the prediction errors for the current \textcolor{black}{model}.
Specifically, P2P-RL trains the model on the environment dataset and treats the model learning process as an offline RL problem, where the true dynamics becomes the "decision maker" of the environment dataset in our problem formulation. 
Thus, regarding a transition $(s, a, r, s')$, $R^m$ can be directly approximated by computing $-\|\hat{s}'-s'\|-\|\hat{r}-r\|$ , where $\hat{s}', \hat{r}\sim\hat{P}(\cdot, \cdot|s, a)$.



\section{Experiments}
\label{sec:ex}
\begin{figure}[t]
  \centering
  \includegraphics[width=1.0\textwidth]{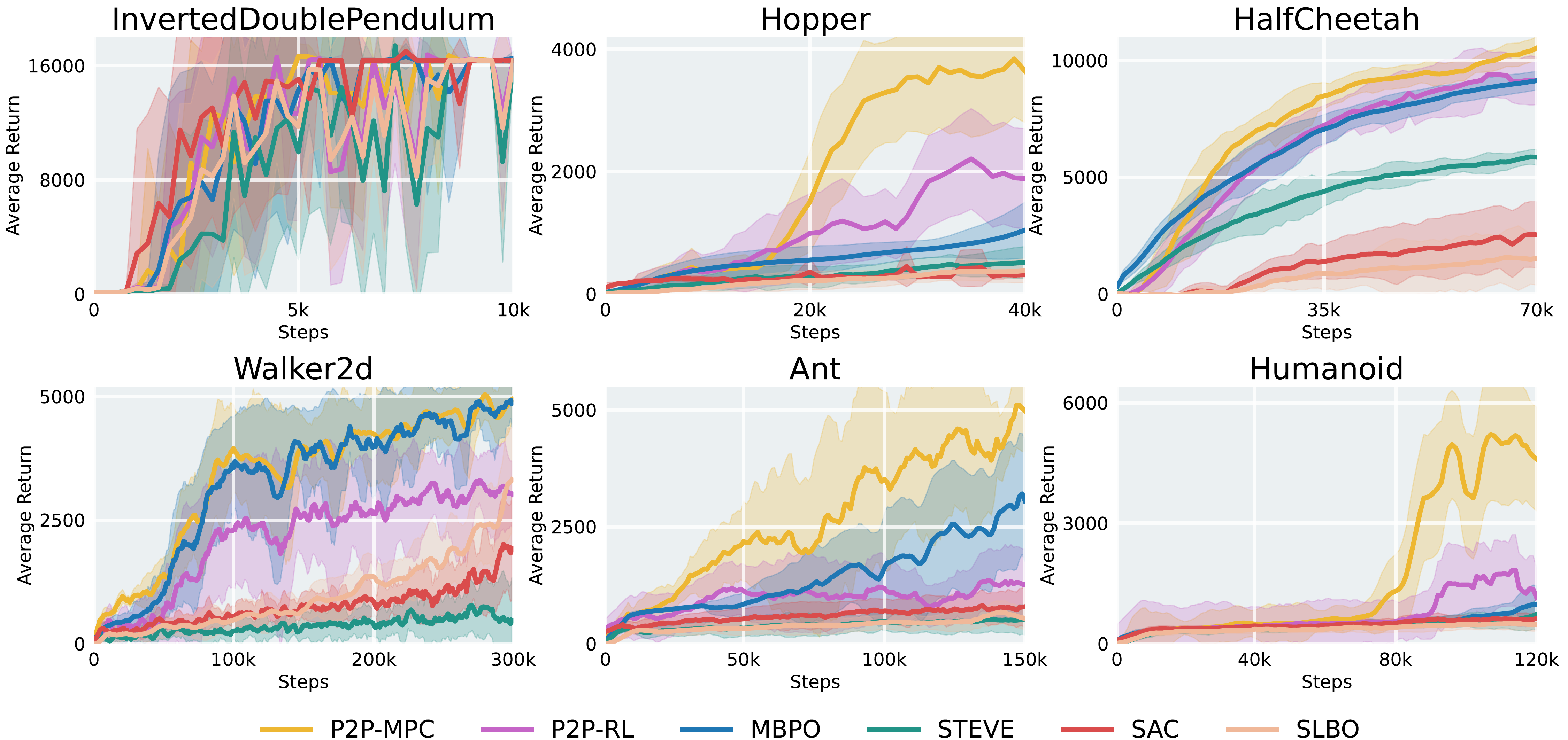}
  \caption{\footnotesize Comparisons against baselines on MuJoCo.
  Solid curves
represent the mean of runs over 5 different random seeds, and shaded regions correspond to standard deviation among these runs.
  }
  \label{fig:comparison mujoco}
\end{figure}
In this section, we first evaluate the empirical performance of P2P on a set of MuJoCo continuous control tasks~\citep{todorov2012mujoco} in Section~\ref{sec:comp}, and then present an quantitative analysis for the model error in Section~\ref{sec:error analysis}.
The ablation study provided in Section~\ref{sec:ablation} verifies the necessity of an active policy-model interaction when computing the multi-step prediction loss.
Besides, a key parameter of P2P-MPC is also studied in Section~\ref{sec:ablation}.
Finally, to gain a deeper understanding of how P2P work in offline settings, a visualization of the model's rollout trajectories is shown in Section~\ref{sec:maze}.


\subsection{Comparisons with Baselines}
\label{sec:comp}

We compare P2P~\footnote{Code available at \url{https://github.com/ZifanWu/Plan-to-Predict}.} against some state-of-the-art model-free and model-based methods.
The model-free baseline is SAC~\citep{haarnoja2018soft}, which is the model-free counterpart of P2P. 
The model-based baselines include MBPO~\citep{janner2019trust}, which uses short-horizon rollouts branched from the state distribution of previous policies, \textcolor{black}{SLBO~\citep{luo2018algorithmic}, which enjoys theoretical performance guarantee and uses model rollouts from the initial state distribution,} and STEVE~\citep{buckman2018sample}, which also generates short rollouts but uses model data for estimating target values instead of policy learning.
Note that our framework only focuses on model learning and can be employed as a flexible plug-in component by most MBRL algorithms that generate pseudo samples to assist policy learning. 
In our experiments, we plug the model learning process of P2P into MBPO since it is a widely accepted strong baseline in MBRL.
The implementational details and the hyperparameter settings of P2P, as well as the environment settings, are all described in Appendix B.

As shown in Figure~\ref{fig:comparison mujoco}, P2P-MPC significantly outperforms the baselines on several tasks, especially on Hopper, Humanoid and Ant, which are generally considered to be the most challenging MuJoCo tasks. 
In Walker2d, the best rollout length is one step in our implementation and MBPO, thus resulting in similar performance between these two methods. 
Compared to P2P-MPC, the performance of P2P-RL is less stable.
Through an investigation on the model learning process (see Appendix C), we find that P2P-RL sometimes struggles to balance the loss of behavior cloning and RL, leading to the difficulty in hyperparameter tuning and the instability of learning network parameters, while the optimization of P2P-MPC is non-parametric and thus does not suffer from this issue.

\subsection{Model Error Analysis}
\label{sec:error analysis}
\begin{figure}[t]
  \centering
  \includegraphics[width=1.0\textwidth]{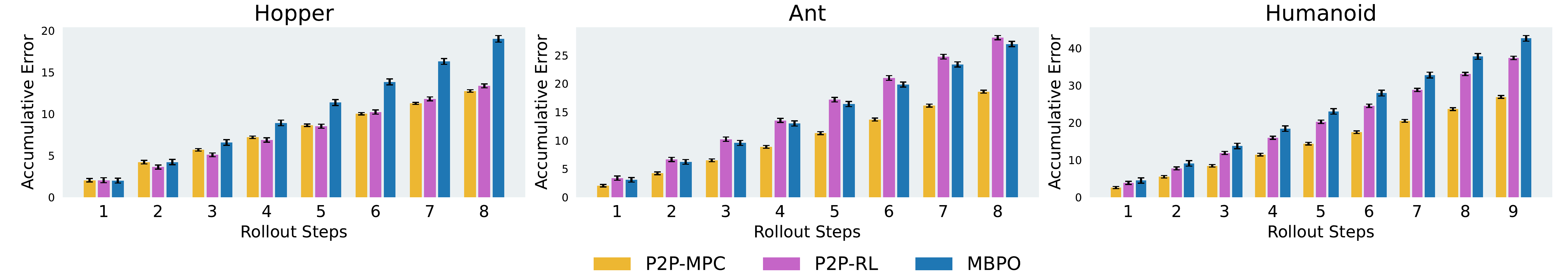}
  \caption{\footnotesize Quantitative analysis of the accumulative model errors. 
  While the true environment dynamics is not accessible during learning, the errors are approximated via supervised learning on the environment dataset.
  }
  \label{fig:error analysis}
\end{figure}

The theoretical result presented in Section~\ref{sec:theory} implies that to guarantee better policy improvement, the model should learn to minimize the accumulative errors along the trajectories induced by the current policy and the model.
Since P2P is designed based on this principle, the empirical results in Figure~\ref{fig:comparison mujoco} verify the theoretical result in terms of the final performance.
To further verify the effectiveness of this algorithmic design, we study the accumulative prediction error on three tasks that require rollouts with relative long horizons, i.e., Hopper, Ant and Humanoid.
The results shown in Figure~\ref{fig:error analysis} together with Figure~\ref{fig:comparison mujoco} strongly support our theoretical claims by demonstrating that the algorithm inducing less accumulative error achieves better performance.
Besides, we can also observe that in Hopper the model errors of P2P-MPC in the first few steps are slightly larger than the other two methods. Nevertheless, as the rollout trajectory goes longer, the accumulative error induced by P2P-MPC grows significantly slower than the other methods.
This further agrees with the intuition mentioned in Figure~\ref{fig:P2P illustration} by showing that P2P is able to trade the one-step model error for accumulative model error by considering the long-term effect of the immediate prediction.

\subsection{Ablation Study}
\label{sec:ablation}

\paragraph{Multi-Step Prediction Loss} 
\begin{figure}[t]
  \centering
  \includegraphics[width=1.0\textwidth]{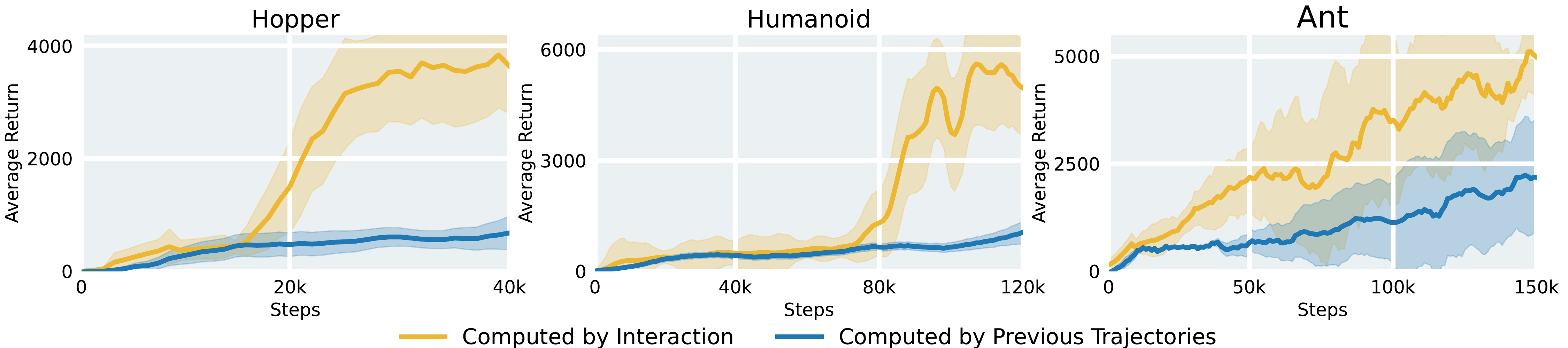}
  \caption{\footnotesize Results comparing performances induced by two ways of computing the multi-step prediction loss.
  \textcolor{black}{The yellow curves represent the performance of P2P-MPC, which minimizes the multi-step loss on the trajectories generated by active interactions between the model and the current policy. The blue curves show the results of an ablation version of MBPO where the original one-step loss is replaced by a multi-step loss computed over the trajectories sampled from the environment dataset. The lengths of these trajectories are set to the same in this comparison.}
  }
  \label{fig:abla interact}
\end{figure}
In Section~\ref{sec:theory}, we emphasize that the loss of P2P is essentially different from the multi-step prediction loss used in prior work, since P2P computes the multi-step loss on the state distribution induced from an active interaction between the model and the current policy, and on the contrary, prior work computes this loss on the trajectories collected by previous policies.
The empirical results in Figure~\ref{fig:abla interact} compare the performances induced by these two kind of losses and highlight the importance of the interactive model learning process in P2P.

\paragraph{Planning Horizon}
\begin{figure}[t]
  \centering
  \includegraphics[width=1.0\textwidth]{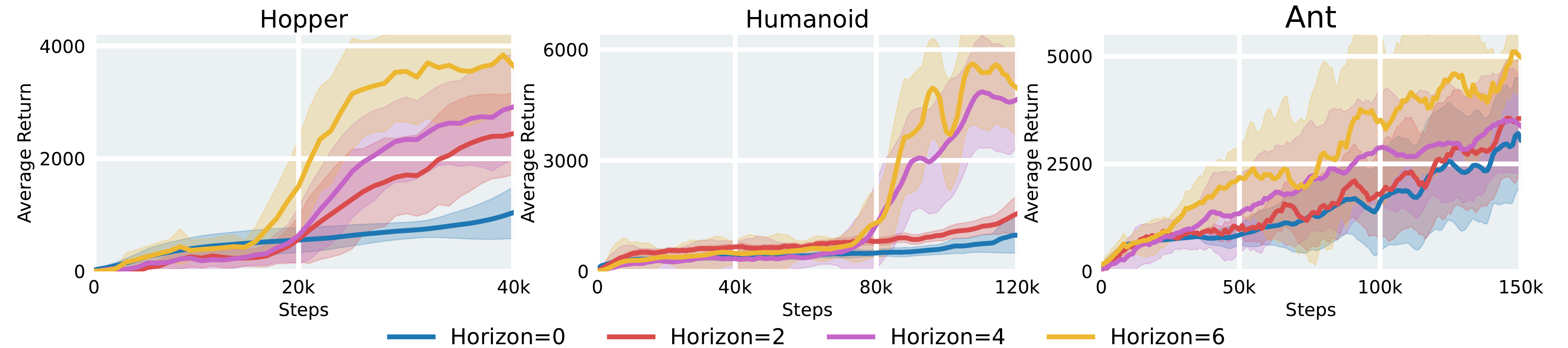}
  \caption{\footnotesize Results comparing performances induced by different planning horizons of the model in P2P-MPC.
  }
  \label{fig:abla horizon}
\end{figure}
Different from most RL methods, MPC methods can assign an explicit horizon for planning, hence P2P-MPC is able to set the planning horizon of the model to be exactly the same with the rollout horizon.
This difference may partially explain the superior performance of P2P-MPC against P2P-RL.
Nevertheless, longer planning horizon incurs higher computational cost, especially in tasks with high-dimensional state spaces.
To trade-off the performance and the computational cost, our solution is to simply limit the planning horizon, and the empirical results shown in Figure~\ref{fig:abla horizon} suggests that a horizon of 6 is satisfactory for the tasks experimented.



\subsection{An Interpretative Experiment}
\label{sec:maze}
To gain an intuitive understanding of how the learning objective of P2P affects the model predictions, 
here we present a visualization of the rollout trajectories in maze2d-medium, an offline learning 
\begin{wrapfigure}{r}{0.18\textwidth}
    \centering
    \includegraphics[width=0.18\textwidth]{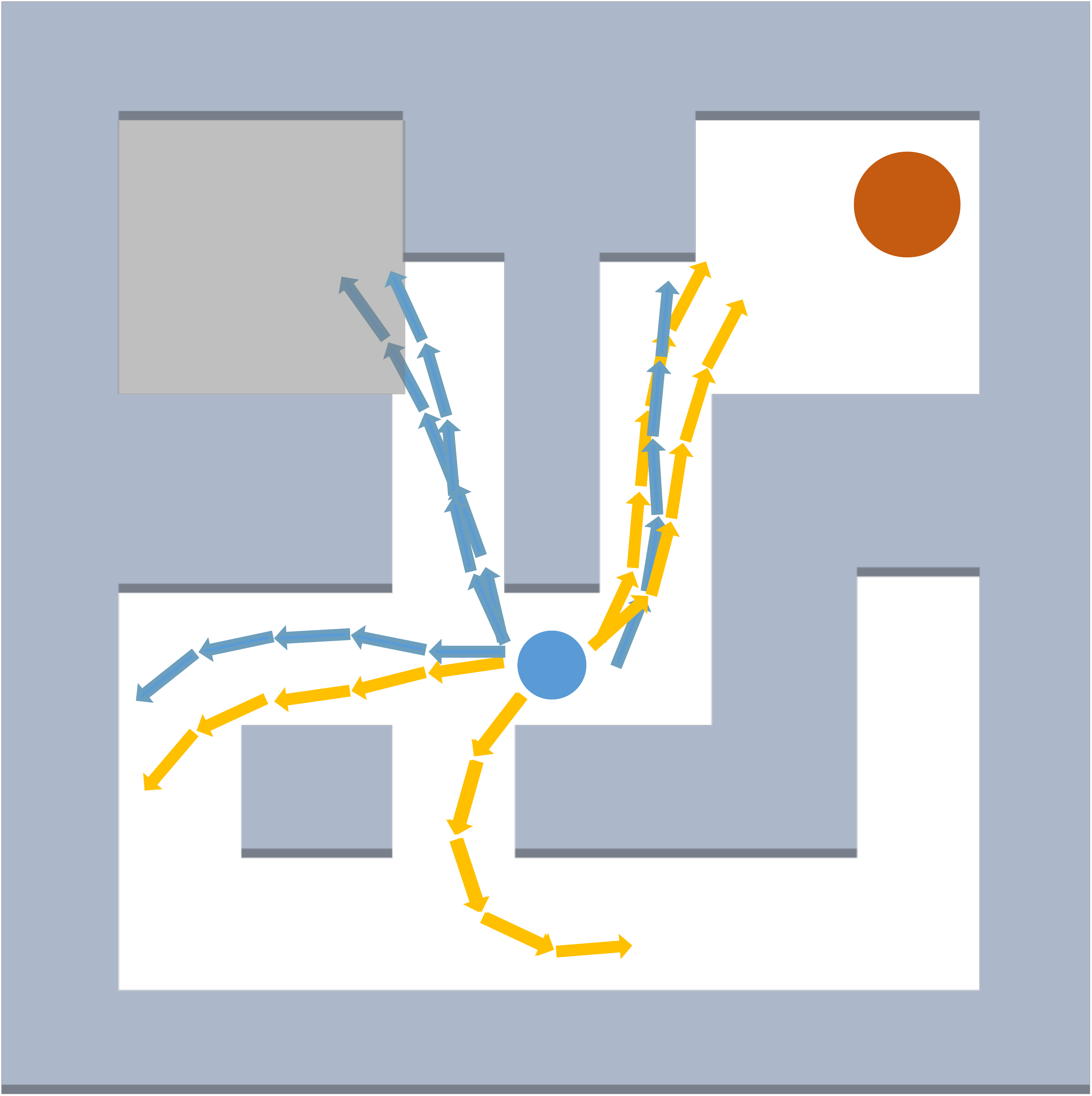}
    \caption{\scriptsize Visualization of models' rollout trajectories. The blue trajectories are generated by MOPO, while the yellow ones are generated by P2P-MPC.}
    \label{fig:maze visual}
\end{wrapfigure}
task in the D4RL benchmark~\citep{fu2020d4rl}. As shown in Figure~\ref{fig:maze visual}, the environment is slightly modified by lengthening the wall at the middle of the maze and fixing the goal to the top right corner.
Besides, the number of offline samples that involve the grey region are partially discarded to increase this region's uncertainty.
Figure~\ref{fig:maze visual} shows the top four trajectories that are most frequently generated from a fixed starting point using our methods and MOPO~\citep{yu2020mopo}, a well-known offline model-based RL method that uses one-step error for model learning.
The visualization clearly demonstrates the different rollout preferences of the two kinds of models: the model of P2P-MPC tends to generate trajectories heading to the regions with low uncertainty, while the model of MOPO fails to prevent the trajectories from being absorbed into the highly uncertain region.
The resulting accumulative model errors are quantified in Figure~\ref{fig:maze error}, and as shown in Table~\ref{table:maze perf}, the final performance of the induced policy of P2P-MPC is significantly better than MOPO.
Note that in online settings, the model error is possible to be corrected by the newly collected data from the environment, while in offline settings this incremental data-collecting process is not allowed.
Thus, the model error can have a more detrimental effect on policy learning, which may severely degrade the asymptotic performance as evidenced in Table~\ref{table:maze perf}.
\begin{figure}[t]
    \begin{minipage}{0.45\textwidth}
    \centering
        \includegraphics[width=0.82\textwidth,height=60pt]{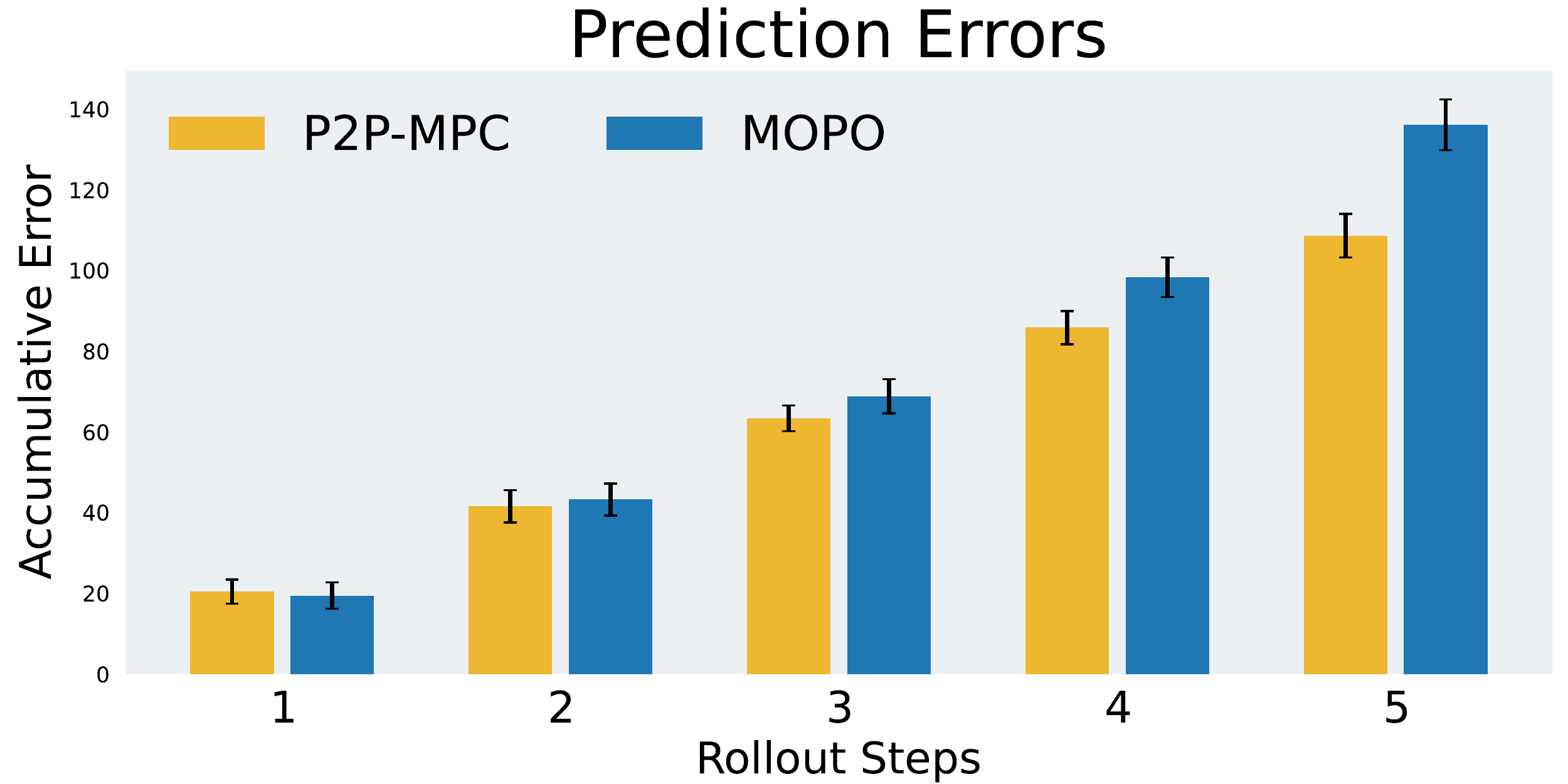}
        \caption{\footnotesize Accumulative prediction error along the rollout trajectories on the modified maze2d-medium task.
  }
  \label{fig:maze error}
\end{minipage}
\begin{minipage}{0.55\textwidth}
\small
\centering
    \begin{tabular}{lll}
    \toprule
    \textbf{Environment}  & P2P-MPC  & MOPO \\
    \hline
    maze2d-medium (modified) & $230.2\ \pm\ 39.5$  &  $174.2\pm 32.4$  \\
    maze2d-large-diverse  & $8.1\ \pm\ 1.1$  &  $0.8\ \pm\ 0.0$ \\
    \bottomrule
  \end{tabular}
  \captionof{table}{\footnotesize Comparison with MOPO on two maze tasks.}
    \label{table:maze perf}
\end{minipage}
\end{figure}
Additional results on a much harder task (i.e., maze2d-large-v1) are also provided in Table~\ref{table:maze perf} to further verifies the effectiveness of P2P-MPC.


\section{Related Work}
\label{sec:related}
MBRL aims at improving the sample efficiency of model-free methods while maintaining high asymptotic performance.
The research of MBRL can be roughly divided into two lines: the model usage and the model learning. 
The most common usage of the model is generating pseudo samples to enrich the data buffer, so as to reduce the interaction with the environment and thus accelerate policy learning~\citep{sutton1990integrated,sutton1991dyna,deisenroth2013survey,kalweit2017uncertainty,luo2018algorithmic,janner2019trust,pan2020trust}. 
The main challenge of this kind of usage is the compounding model error of long-term predictions, which is also known as the model bias issue~\citep{deisenroth2011pilco}.
A number of approaches are proposed to address this issue, varying from using Dyna-like short-horizon rollouts~\citep{sutton1991dyna,janner2019trust}, interpolating between real and pseudo samples~\citep{kalweit2017uncertainty}, learning an uncertainty-aware probabilistic model ensemble~\citep{chua2018deep}, to masking the rollouts according to the uncertainty estimation~\citep{pan2020trust}.
Falling into this category of model usage, the focus of our work is to improve the quality of model-generated trajectories by reducing the accumulative model error along these trajectories.
Through an active interaction between the model and the current policy, P2P reinforces a quick model adaptation to the update of policy, so as to foresee and thus avoid the uncertainty in the future trajectory during rollout.
Hence, one of the major differences between P2P and prior work can be interpreted as: not just to passively estimate the uncertainty after encountering it, but to actively avoid it before this encounter.

It is worth noting that there are also methods based on the optimism in face of uncertainty, which try to exploit the uncertainty of the model for sufficient exploration in the real environment, such as~\citep{curi2021combining}.
If the model's promise of large return turns out to be false, executing the policy trained by such a model will hopefully collect real-world counterfactual data to the previously erroneous model.
However, this correcting process itself requires additional samples, which may run counter to the principle of developing model-based methods. 
Furthermore, exploring the erroneous part of the model may not only provides the agent with false information about the real environment, but also be prone to the model exploitation issue which severely hurts the asymptotic performance of the policy.
Therefore, how to carefully exploit the model uncertainty remains an open question.

In the line of model learning, the most commonly used objective is to simply minimize each one-step prediction error for transitions available in the environment dataset~\citep{kurutach2018model,chua2018deep,janner2019trust}.
However, since the data distribution of the environment dataset varies with the update of policy, it is unlikely to learn an accurate model for all the transitions in complex environments.
Thus, when the model is used to predict for multiple steps, there exists an \textcolor{black}{objective mismatch~\citep{lambert2020objective}} between model usage and model learning.
To address this issue, we propose to minimize the accumulative prediction error under the state distribution of the current policy.
The most relevant work to ours includes \citep{nagabandi2018neural,luo2018algorithmic}, where a similar multi-step prediction loss is employed for model learning.
The essential difference between their multi-step loss and ours is clarified in Section 3.1 and 3.2.
Generative models~\citep{kingma2013auto} have also been utilized to perform multi-step prediction~\citep{mishra2017prediction,ha2018recurrent}.
However, the need for a highly exploratory dataset limits their application to relatively simple and low-dimensional tasks.
Other learning objectives improve traditional model learning from angles that are less similar to ours, including goal-aware model learning that predicts how far the current state is from the goal state~\citep{nair2020goal} and decision-aware model learning that aims to better aids policy learning by re-weighting the model error on different samples~\citep{farahmand2017value, farahmand2018iterative, d2020gradient}. 

\textcolor{black}{In recent years, a series of  work has attempted to learn the model by treating the model rollout process as a sequential decision-making problem.
While these work shares some similarity with our work, the targeted problems and the proposed solutions are not exactly the same as P2P.
\cite{shang2019environment} propose an environment reconstruction method which models the influence of the hidden confounder on the environment by treating the platform, the user and the confounder as three agents interacting with each other. 
They focus on the offline setting (i.e., RL-based recommendation) and simultaneously train the model and the policy using a multi-agent imitation learning method.
\cite{xu2020error} treat the model as a dual agent and analyze the error bounds of the model. They propose to train the model using imitation learning methods. 
\cite{chen2022adversarial} also consider multi-step model error, yet they mainly focus on handling counterfactual data queried by adversarial policies. 
Unlike the above two work~\citep{xu2020error,chen2022adversarial} that both focus solely on model learning, P2P aims at proposing a general MBRL framework and attempts to adapt the model to the continuously updating policy quickly.}

\section{Conclusion and Future Work}
In this paper, we propose an MBRL framework named Plan To Predict (P2P), which novelly treats the model rollout process as a sequential decision making problem.
In this problem reformulation, the roles played by the model and the policy are reversed, where the model becomes a decision maker and the policy instead serves as the outer environment.
Through an active interaction between the model and the current policy, the model can minimize the multi-step accumulative errors on the induced trajectories and thus quickly adapt to the state distribution of the current policy.
We provide theoretical guarantee for P2P, which also clarifies the difference of our multi-step loss from prior methods and justifies the mismatch of objectives between model learning and model usage.
Empirical results on several challenging benchmark tasks verify the effectiveness of P2P.
Moreover, observing that the capability to avoid model error makes P2P promising for offline settings, we also provide preliminary results on some offline learning tasks.
While this is not the major focus of this paper, studying how P2P can be applied in offline tasks more properly is left as an important future work.

\begin{ack}
The work is supported by the National Natural Science Foundation of China under Grant 62076259. 
The authors would like to thank Qian Lin, Zongkai Liu and Xuejing Zheng for the useful discussions, and thank Hanlin Yang for the help in drawing Figure~\ref{fig:P2P illustration} (b).
Finally, the reviewers and metareviewer are highly appreciated for their constructive feedback on the paper.

\end{ack}

\medskip

\bibliographystyle{unsrtnat} 
\bibliography{neurips_2022}
\clearpage
\section*{Checklist}


\begin{enumerate}

\item For all authors...
\begin{enumerate}
  \item Do the main claims made in the abstract and introduction accurately reflect the paper's contributions and scope?
    \answerYes{}
  \item Did you describe the limitations of your work?
    \answerYes{See Section~\ref{sec:ex}.}
  \item Did you discuss any potential negative societal impacts of your work?
    \answerNA{}
  \item Have you read the ethics review guidelines and ensured that your paper conforms to them?
    \answerYes{}
\end{enumerate}

\item If you are including theoretical results...
\begin{enumerate}
  \item Did you state the full set of assumptions of all theoretical results?
    \answerYes{}
        \item Did you include complete proofs of all theoretical results?
    \answerYes{See Appendix A.}
\end{enumerate}

\item If you ran experiments...
\begin{enumerate}
  \item Did you include the code, data, and instructions needed to reproduce the main experimental results (either in the supplemental material or as a URL)?
    \answerYes{See the supplementary material and Appendix B.}
  \item Did you specify all the training details (e.g., data splits, hyperparameters, how they were chosen)?
    \answerYes{See the supplementary material and Appendix B.}
        \item Did you report error bars (e.g., with respect to the random seed after running experiments multiple times)?
    \answerYes{See Section~\ref{sec:ex}.}
        \item Did you include the total amount of compute and the type of resources used (e.g., type of GPUs, internal cluster, or cloud provider)?
    \answerYes{See Appendix B.}
\end{enumerate}

\item If you are using existing assets (e.g., code, data, models) or curating/releasing new assets...
\begin{enumerate}
  \item If your work uses existing assets, did you cite the creators?
    \answerYes{}
  \item Did you mention the license of the assets?
    \answerNA{}
  \item Did you include any new assets either in the supplemental material or as a URL?
    \answerNA{}
  \item Did you discuss whether and how consent was obtained from people whose data you're using/curating?
    \answerNA{}
  \item Did you discuss whether the data you are using/curating contains personally identifiable information or offensive content?
    \answerNA{}
\end{enumerate}

\item If you used crowdsourcing or conducted research with human subjects...
\begin{enumerate}
  \item Did you include the full text of instructions given to participants and screenshots, if applicable?
    \answerNA{}
  \item Did you describe any potential participant risks, with links to Institutional Review Board (IRB) approvals, if applicable?
    \answerNA{}
  \item Did you include the estimated hourly wage paid to participants and the total amount spent on participant compensation?
    \answerNA{}
\end{enumerate}

\end{enumerate}

\end{document}